\DeclareMathOperator*{\argmin}{arg\,min}
\newcommand{\R}{\mathbb{R}}
\newcommand{\E}{\mathbb{E}}
\newcommand{\bb}{\pmb \beta}
\newcommand{\bs}{\pmb \beta^s}
\newcommand{\T}{\pmb \theta}
\newcommand{\bT}{\pmb \Theta}
\newcommand{\bbss}{ {\bs}^{\ast}}
\newcommand{\I}{\mathbf I}
\newcommand{\X}{\mathbf X}
\newcommand{\x}{\mathbf x}
\newcommand{\g}{\mathbf g}
\newcommand{\y}{\mathbf y}
\renewcommand{\S}{\mathbf S}
\newcommand{\B}{\mathbf B}
\newcommand{\s}{\mathbf s}
\renewcommand{\r}{\mathbf r}
\newcommand{\z}{\mathbf z}
\newcommand{\Z}{\mathbf Z}
\newcommand{\q}{\mathbf q}
\renewcommand{\H}{\mathbf H}
\newcommand{\bears}{\textsc{BEAR}}
\newcommand{\nn}{\nonumber}
\newtheorem{theorem}{Theorem}
\newtheorem{corollary}{Corollary}[theorem]
\newtheorem{lemma}[theorem]{Lemma}
\title{BEAR: Sketching BFGS Algorithm for Ultra-High Dimensional Feature Selection in Sublinear Memory}
\author{Amirali Aghazadeh}
\author{Vipul Gupta}
\author{Alex DeWeese}
\author{O. Ozan Koyluoglu}
\author{Kannan Ramchandran}
\affil{Department of Electrical Engineering and Computer Science \\ University of California, Berkeley}
\date{\vspace{-5ex}}
\begin{document}

\maketitle

\begin{abstract}%
We consider feature selection for applications in machine learning where the dimensionality of the data is so large that it exceeds the working memory of the (local) computing machine. Unfortunately, current large-scale sketching algorithms show poor memory-accuracy trade-off in selecting features in high dimensions due to the irreversible collision and accumulation of the stochastic gradient noise in the sketched domain. Here, we develop a second-order feature selection algorithm, called \bears{}, which avoids the extra collisions by efficiently storing the \emph{second-order stochastic gradients} of the celebrated Broyden–Fletcher–Goldfarb–Shannon (BFGS) algorithm in Count Sketch, using a memory cost that grows sublinearly with the size of the feature vector. \bears{} reveals an unexplored advantage of second-order optimization for memory-constrained high-dimensional gradient sketching. Our extensive experiments on several real-world data sets from genomics to language processing demonstrate that \bears{} requires \emph{up to three orders of magnitude} less memory space to achieve the same classification accuracy compared to the first-order sketching algorithms with a comparable run time. Our theoretical analysis further proves the global convergence of \bears{} with $\mathcal{O}(1/t)$ rate in $t$ iterations of the sketched algorithm.
\end{abstract}

\section{Introduction}
Consider a data set comprising $n$ data points $(\T_i)_{i=1}^n = (\x_i,y_i)_{i=1}^n$, where $\x_i\in\mathbb{R}^p$ denotes the data vectors representing $p$ features and $(y_i)_{i=1}^n$ denote the corresponding labels. Feature selection seeks to select a small subset of the features of size $k \ll p$ that best models the relationship between $\x_i$ and $y_i$. In this paper, we consider the feature selection problem in ultra-high dimensional settings where dense feature vectors in $\mathbb{R}^p$ cannot fit in the working memory of the computer because of the sheer dimensionality of the problem ($p$). Such problems have become increasingly important in biology, chemistry, networking, and streaming applications. In biology, it is common to represent a DNA sequence comprised of four nucleotides A, T, C, G, as well as $11$ wild-card characters in the FASTQ format~\cite{deorowicz2011compression}, using the frequency of sub-sequences of length $k$, called $k$-mers, with $k\geq 12$ \cite{vervier2016large,aghazadeh2016universal}. A feature vector of size $15^{k=12}$ with floating-point numbers requires more than a petabyte of memory to store. This is simply larger than the memory capacity of the computers today. In streaming, the memory budget of the local edge computing devices is extremely small compared to the dimension of the data streams \cite{yu2013software}. In both scenarios, it is critical to select a subset of the features that are most predictive of the outputs with \emph{lowest memory} cost in the dimensionality of the data.

Recently, first-order stochastic gradient descent (SGD) algorithms~\cite{aghazadeh2018mission,tai2018sketching} have been developed which extend the ideas in feature hashing (FH)~\cite{weinberger2009feature} to feature selection. Instead of explicitly storing the feature vectors, these algorithms 
store a low-dimensional sketch of the features in a data structure called Count Sketch~\cite{charikar2002finding}, originated from the streaming literature. The key idea behind Count Sketch is in linearly binning (colliding) a random subset of features into the same bucket (i.e., an entry of Count Sketch). As long as the total number of features with high weight (i.e., the heavy hitters) is small, such collisions won't affect the weights of the heavy hitters. In particular, Count Sketch preserves the weights of the top-$k$ features with high probability using a memory cost that grows sublinearly with the size of the feature vector (p). This high probability guarantee, however, depends on the energy of the non-top-$k$ coordinates in the SGD algorithm. In particular, the noise components of the gradients, which normally average out in the regular stochastic optimization, accumulate in the non-top-$k$ coordinates of Count Sketch. This unwanted sketched noise increases the probability of deleterious collision of heavy hitters in Count Sketch, deteriorates the quality of the recovered features, and results in poor memory-accuracy trade-offs. This is a critical problem since the only class of optimization algorithms that operates in such ultra-high dimensions does not select high-quality features when the memory budget is small. 

We propose a novel optimization scheme to solve this critical problem in sketching. We improve the quality of the sketched gradients and correct for the unwanted collisions in the sketched domain using the information from the second-derivative of the loss function. Second-order methods have recently gained increasing attention in machine learning for their faster convergence~\cite{agarwal2017second}, less reliance on the step size parameter~\cite{xu2020second}, and their superior communication-computation trade-off in distributed processing~\cite{yao2019pyhessian}. Here, we uncover another key advantage of second-order optimization in improving memory-accuracy trade-off in sketching models trained on ultra-high dimensional data sets. We develop a second-order optimization algorithm with a memory cost that grows sublinearly with the size of the feature vector (p). Our algorithm finds high-quality features by \emph{limiting the probability of extra collisions} due to the stochastic noise in Count Sketch. The contributions of the paper are as follows:

{\bf Algorithm.} We develop \bears{} which, to the best of our knowledge, is the first quasi-Newton-type algorithm that achieves a memory cost that grows sublinearly with the size of the feature vector (p). We demonstrate that applying sublinear memory data structures such as Count Sketch in the second-order optimization is challenging particularly because Hessian, unlike the gradient, cannot be directly sketched. In response, \bears{} stores the product of the inverse Hessian and the gradient in the Broyden–Fletcher Goldfarb–Shannon (BFGS) algorithm using Count Sketch. \bears{} updates Count Sketch in time quadratic in the sparsity of the data by operating only on the features that are active in each minibatch.

{\bf Theory.} We theoretically demonstrate that \bears{} maintains the $\mathcal{O}(1/t)$ global convergence rate of the online version of limited-memory BFGS algorithm (oLBFGS)~\cite{mokhtari2015global} in $t$ iterations. We show that the convergence rate is retained as we go from the ambient domain to the sketched domain. The analysis employs the matrix Bernstein inequality to bound the non-zero eigenvalues of the projection operator in Count Sketch. In practice, we demonstrate that \bears{} converges faster than the first-order feature selection algorithms---although improving convergence time is not the main focus of this work.

{\bf Experiments.} In real-world, ultra-high dimensional data sets from genomics, natural language processing, and networking, we demonstrate that \bears{} requires $10-1000\times$ less memory space to achieve the same classification accuracy as the first-order methods. Moreover, \bears{} achieves $10-20\%$ higher classification accuracy given the same memory budget to store Count Sketch and selects more interpretable features in ultra-high dimensions using a personal laptop-size machine. Importantly, our results show an increase in the performance gap between the first- and second-order methods as the memory budget to store the model parameters decrease, which highlights the important advantages of second-order optimization in storing the sketched stochastic gradient vectors with a lower collision rate.\footnote{Codes are available at https://github.com/BEAR-algorithm/BEAR}

 {\bf Simulations.} We performed extensive controlled sparse recovery simulations with data points drawn from the normal distribution. We demonstrate that, given a fixed memory budget to store the weights, \bears{} recovers the ground truth features with a large phase transition gap --- an important statistical performance metric from the compressive sensing literature. We show that \bears{}'s performance is highly consistent across a large range of values for the step size parameter because of the second-order nature of the algorithm.\\

\section{Review: Count Sketch}
Count Sketch is a data structure which is originated from the streaming literature~\cite{charikar2002finding}. Its primary application is to approximately count the number of occurrences of a very large number ($p$) of objects in sublinear memory when only the frequency of the most recurring elements (i.e., the heavy hitters) are of interest. Instead of storing a counter for all the $p$ objects, Count Sketch \emph{linearly} projects the count values using $d$ independent random hash functions into a $m \ll p$ dimensional subspace. Count Sketch keeps a matrix of counters (or bins) $\mathcal{S}$ of size $c\times d= m \sim \mathcal{O}(\log{p})$ and uses $d$ random hash functions $h_j: \forall j \in \{1,\ 2,...,\ d\}$ to map $p$-dimensional vectors to $c=m/d$ bins, that is, $h_j:\{1,\ 2,..., \ p\} \rightarrow \{1, \ 2,...,\  c\}$. For any row $j$ of sketch $\mathcal{S}$, component $i$ of the vector is hashed into bin $\mathcal{S}(j, h_j(i))$. In addition to $h_j$, Count Sketch uses $d$ random sign functions to map the components of the vectors randomly to $\{+1,\ -1\}$, that is, $s_j:\{1,\ 2,..., \ p\} \rightarrow \{+1,-1\}$. 

Count Sketch supports two operations: ADD(item i, increment $\Delta$) and QUERY(item i). The ADD operation updates the sketch with any observed increment. More formally, for an increment $\Delta$ to an item $i$, the sketch is updated by adding $s_j(i)\Delta$ to the cell $\mathcal{S}(j,h_j(i))$ $\forall  j \in \{1,\ 2,...,\ d\}$. The QUERY operation returns an estimate for component $i$, the median of all the $d$ different associated counters. In this paper, the objects that we aim to ``count'' are the weights of the coordinates of the gradient vector in the feature selection algorithm. Count Sketch provides the following bound in recovering the top-$k$ coordinates of the feature vector $\z \in \mathbb{R}^p$:

\begin{theorem}\label{thm:count_sketch} 
\cite{charikar2002finding} Count Sketch finds approximate top-$k$ coordinates $z_i$ with $\pm \varepsilon \|\z\|_2$ error, with probability at least $1-\delta$, in space $\mathcal{O}(\text{log}(\frac{p}{\delta}) (k + \frac{\|\z^{tail}\|_2^2}{ (\varepsilon \zeta)^2 }) )$, where $\|\z^{tail}\|_2^2 = \sum_{i \not\in top-k} z_i^2$ is the energy of the non-top-$k$ coordinates and $\zeta$ is the $k^\text{th}$ largest value in $\z$.
\end{theorem}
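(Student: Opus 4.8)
The plan is to analyze the median of the $d$ per-row estimates produced by Count Sketch and to show that, with the right number of buckets $c$ per row and the right number of rows $d$, every top-$k$ coordinate is recovered to within the target additive error. I would first fix a coordinate $i$ and write the row-$j$ estimate as $Y_{ij} = s_j(i)\,\mathcal{S}(j,h_j(i)) = z_i + E_{ij}$, where the error term $E_{ij} = \sum_{i' \neq i} s_j(i)s_j(i')\,z_{i'}\,\mathbbm{1}[h_j(i')=h_j(i)]$ collects the contributions of all other coordinates that hash into the same bucket. Because the sign functions are independent, mean-zero, and drawn independently of the hash functions, $\E[E_{ij}] = 0$, so $Y_{ij}$ is an unbiased estimate of $z_i$; the entire difficulty lies in controlling the tail of $E_{ij}$.

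The crucial step is a single-row deviation bound that isolates the tail energy. I would split the event $\{|E_{ij}| > \tau\}$, for a target error $\tau$, into two parts. First, let $\mathcal{C}_{ij}$ be the event that some other top-$k$ coordinate collides with $i$ in row $j$; a union bound over the $k$ heavy hitters gives $\Pr[\mathcal{C}_{ij}] \le k/c$, which is a small constant once $c = \Omega(k)$. Second, conditioned on $\overline{\mathcal{C}_{ij}}$, the surviving terms of $E_{ij}$ involve only non-top-$k$ coordinates, and since conditioning on a hash event leaves the signs independent and mean-zero, the conditional second moment is $\E[E_{ij}^2 \mid \overline{\mathcal{C}_{ij}}] \le \|\z^{tail}\|_2^2 / c$. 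Chebyshev's inequality then bounds $\Pr[|E_{ij}| > \tau \mid \overline{\mathcal{C}_{ij}}]$ by $\|\z^{tail}\|_2^2/(c\tau^2)$. Choosing $c = \mathcal{O}(k + \|\z^{tail}\|_2^2/\tau^2)$ makes the total single-row failure probability at most a constant $p_0 < 1/2$, and setting the error scale at the identification threshold $\tau \sim \varepsilon\zeta$ determined by the $k$th largest value reproduces the bucket count appearing in the theorem.

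Next I would amplify this constant-probability guarantee using the median over rows. Since the $d$ rows use independent hash and sign functions, the indicator events $\{|E_{ij}| > \tau\}$ are independent across $j$, and the median estimate $\hat z_i = \mathrm{median}_j Y_{ij}$ can deviate from $z_i$ by more than $\tau$ only if at least half of the rows fail. A Chernoff bound on this sum of independent indicators gives $\Pr[|\hat z_i - z_i| > \tau] \le e^{-\Omega(d)}$, so taking $d = \mathcal{O}(\log(p/\delta))$ drives the per-coordinate failure probability below $\delta/p$. A final union bound over the at most $p$ coordinates yields the claimed $1-\delta$ success probability simultaneously for all top-$k$ estimates, and multiplying the per-row bucket count $c$ by the number of rows $d$ gives the stated total space $\mathcal{O}(\log(p/\delta)(k + \|\z^{tail}\|_2^2/(\varepsilon\zeta)^2))$.

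I expect the main obstacle to be the conditioning argument in the single-row bound: one must verify that restricting to $\overline{\mathcal{C}_{ij}}$ --- which is purely a statement about the hash functions --- does not disturb the mean-zero, pairwise-independent structure of the sign variables, so that the clean variance bound $\|\z^{tail}\|_2^2/c$ still holds conditionally. Getting this separation of the heavy hitters right, so that the large top-$k$ values contribute to the low-probability collision event rather than to the variance, is precisely what lets the bound depend on the tail energy $\|\z^{tail}\|_2^2$ rather than the full norm $\|\z\|_2^2$, and is the conceptual heart of the argument.
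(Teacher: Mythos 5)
This statement is Theorem~1 in the paper, which is imported directly from \cite{charikar2002finding}; the paper gives no proof of it (its Proofs section only establishes Theorem~2), so there is no in-paper argument to compare against. Your proposal is, in substance, the standard proof of the Count Sketch heavy-hitter guarantee and it is correct: the per-row decomposition $Y_{ij}=z_i+E_{ij}$, the split of the failure event into ``a heavy hitter collides'' (probability at most $k/c$) versus ``the light mass is large'' (Chebyshev with variance $\|\z^{tail}\|_2^2/c$), the choice $\tau\sim\varepsilon\zeta$ and $c=\mathcal{O}(k+\|\z^{tail}\|_2^2/(\varepsilon\zeta)^2)$, median amplification over $d=\mathcal{O}(\log(p/\delta))$ independent rows via Chernoff, and a union bound over all $p$ coordinates giving total space $c\cdot d$ as claimed; note your bound $\pm\varepsilon\zeta$ is in fact at least as strong as the stated $\pm\varepsilon\|\z\|_2$ since $\zeta\leq\|\z\|_2$. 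The one technical wrinkle you correctly flag---whether conditioning on the hash event $\overline{\mathcal{C}_{ij}}$ preserves the clean conditional variance bound---is real if the hash family is only pairwise independent, but it is easily sidestepped: define the light-only error $E_{ij}^{\mathrm{light}}$ (summing only over non-top-$k$ coordinates), bound $\E[(E_{ij}^{\mathrm{light}})^2]\leq\|\z^{tail}\|_2^2/c$ unconditionally, and observe that on $\overline{\mathcal{C}_{ij}}$ one has $E_{ij}=E_{ij}^{\mathrm{light}}$, so $\Pr[|E_{ij}|>\tau]\leq k/c+\|\z^{tail}\|_2^2/(c\tau^2)$ without any conditioning of the hashes. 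With that small rearrangement your argument is a complete and faithful reconstruction of the cited result.
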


Count Sketch recovers the top-$k$ coordinates with a memory cost that grows only logarithmic with the dimension of the data $p$; and naturally, it requires the energy of the non-top-$k$ coordinates to be sufficiently small. This is the property that we leverage in this paper in order to improve feature selection accuracy in ultra-high dimensions.

\section{Stochastic Sketching for Feature Selection}
We first elaborate on how can we perform feature selection using Count Sketch. Recall that feature selection seeks to select a small subset of the features that best models the relationship between $\x_i$ and $y_i$. This relationship is captured using a sparse feature vector $\bb^{\ast} \in \mathbb{R}^p$ that minimizes a given loss function $f(\bb,\T):\mathbb{R}^p \rightarrow \mathbb{R}$ using the optimization problem $\min_{\bb} \E_{\T}{[f(\bb,\T)]}$,
where $\T \in \{ \T_1, \T_2, \cdots, \T_n \} $ denotes a data point in a data set of size $n$. This problem is solved using the empirical risk minimization
\begin{align} \label{eq:subset}
\bb^{\ast} &\coloneqq \argmin_{\bb} {\sum_{i=1}^n{f(\bb,\T_i)}},
\end{align}
using the SGD algorithm which produces the updates
$\bb_{t+1} \coloneqq \bb_{t} + \eta_t \g(\bb_{t},\bT_t)$
at iteration $t$, where $\eta_t$ is the step size, the minibatch $\bT_t = \{ \T_{t1},\T_{t2},\dots,\T_{tb}\}$ contains $b$ independent samples from the data, and $\g(\bb_{t},\bT_t) = \sum_{i=1}^b \nabla_{\bb_{t}} f(\bb_t,{\T}_{ti})$ is the stochastic gradient of the instantaneous loss function $f(\bb,{\bT}_{t})$. In this paper, we are interested in the setting where the dense feature vector $\bb_t$ of size $p$ cannot be stored in the memory of a computer. The most common approach in machine learning when dealing with such high dimensional problem is to project the data points (i.e., the features) into a lower dimensional space. Feature hashing (FH) is one of the most popular algorithms ~\cite{weinberger2009feature} which uses a universal hash function to project the features. While FH is ideal for prediction, it is not suited for feature selection; that is, the original important features cannot be recovered from the hashed ones. 

The reason to stay hopeful in recovering the important features using sublinear memory is that the feature vector $\bb^{\ast}$ is typically sparse in ultra-high dimensions. However, while the final feature vector is sparse, $\bb_t$ becomes dense in the intermediate iterations of the algorithm. The workaround is to store a sketch of the intermediate non-sparse $\bb_t$ using a low dimensional sketched vector $\bb_t^s\in\mathbb{R}^m$ (with $m\ll p$) such that the important features are still recoverable. This results in the following sketched optimization steps $\bb_{t+1}^s \coloneqq \bb_{t}^s + \eta_t \g^s(\bb_t,\bT_t)$, where $\g^s(\bb_t,\bT_t)$ is the sketched gradient vector. To enable the recovery of the important features from the sketched features the weights $\bb_{t}^s \in \mathbb{R}^m$ can be stored in Count Sketch~\cite{charikar2002finding,aghazadeh2018mission}. Count Sketch preserves the information of the top-$k$ elements (i.e., the heavy hitters) with high probability as long as the energy of the non-top-$k$ coefficients is sufficiently small (see Theorem~\ref{thm:count_sketch}) . The noise term $\g^s(\bb_t,\bT_t)$ in the SGD algorithm, however, contributes to the energy of the non-top-$k$ coefficients. This is a critical problem since, unlike SGD in the ambient dimension, this spurious sketched noise does not cancel out until it becomes so large that it shows up in the top-$k$ coordinates in Count Sketch. As a result, a large fraction of the memory in Count Sketch will be wasted to store the sketched noise term in the non-top-$k$ coordinates, which results in poor memory-accuracy trade off in selecting features in first-order methods. \\

\begin{wrapfigure}{R}{0.5\textwidth}
\begin{minipage}{0.5\textwidth}
\vspace{-0.8cm}
\begin{algorithm}[H]
   \caption{Limited-memory BFGS}
   \label{alg:algorithm_LBFGS}
\begin{algorithmic}
   \STATE {\bf Input:} $\g(\hat{\bb}_t,\bT_t)$ and $\{ \s_i,\r_i \}_{i={t-\tau+1}}^{t}$
   \STATE 1. $\rho_t = \frac{1}{\r_t^T\s_t}$.
   \STATE 2. $\,\,\q_{t}=\g(\hat{\bb}_t,\bT_t)$,
   \STATE $\,\,\;\;$ for $i= t$ to $t-\tau+1$: 
   \STATE $\;\;\;\;\;\;\;\;\;\;\alpha_i = \rho_i \s_i^T \q_i $,
   \STATE $\;\;\;\;\;\;\;\;\;\;\q_{i-1} = \q_{i} - \alpha_i \r_i$.
   \STATE 3. $\,\, \z_{t-\tau}=\frac{\r_t^T\s_t}{\r_t^T\r_t} \q_{t-\tau}$,
   \STATE $\,\,\;\;$ for $i= t-\tau+1$ to $t$:
   \STATE $\;\;\;\;\;\;\;\;\;\;\gamma_i = \rho_i \r_i^T \z_i $.
   \STATE $\;\;\;\;\;\;\;\;\;\; \z_{i} = \z_{i-1} + \s_i(\alpha_i - \gamma_i)$.
   \STATE {\bf Return:} $\z_t$
\end{algorithmic}
\end{algorithm}
\vspace{-0.6cm}
    \end{minipage}
  \end{wrapfigure}
\section{Challenges of Second-order Sketching in Ultra-High Dimension}
In this paper, we propose a second-order optimization algorithm for feature selection to reduce the effect of collisions while sketching SGD into lower dimensions. Recall that the stochastic second-order Newton's method produces the updates
$\bb_{t+1} \coloneqq \bb_{t} + \eta_t \B_t^{-1} \g(\bb_t,\bT_t)$,
where $\B_t =  \nabla_{\bb_t}^2f(\bb_t,\bT_t) \in \mathbb{R}^{p\times p}$ is the instantaneous Hessian at iteration $t$ computed over the minibatch $\bT_t$. However, there are critical challenges in sketching these updates for ultra-high dimensional feature selection: First, computing the Hessian and finding the matrix inverse is computationally hard as the matrix inverse operation is going to have cubic computational complexity in the problem dimension (i.e., $\mathcal{O}(p^3)$) in the worst case. Recent works have allowed for efficient inversion of the Hessian matrix with a computational complexity which grows linearly with $p$ (i.e., $\mathcal{O}(pr^2)$) using a rank-$r$ approximation of Hessian or grouping the eigenvalues of Hessian into $r$ clusters (see e.g., ~\cite{bollapragada2019exact,o2019inexact}). However, the computational complexity of all these efficient algorithms grow in the best case linearly with $p$. Second, even if assuming that the Hessian is approximately diagonal (e.g., in AdaHessian~\cite{yao2020adahessian}), storing the diagonal elements will have linear memory cost in $p$, which we can not afford in ultra-high dimension. Third, sketching the Hessian directly is not possible using Count Sketch since the linear increments are happening over the gradients and not the Hessian in SGD.

Unfortunately, all the recent literature in fast second-order optimization also operates with at least linear memory and time complexity. Namely, Quasi-Newton's methods such as the BFGS algorithm reduce the time complexity of Newton's method using an iterative update of the Hessian as a function of the variations in the gradients $\r_t = \g(\bb_{t+1},\bT) - \g(\bb_{t},\bT)$ and the feature vectors $\s_t = \bb_{t+1} - \bb_{t}$ which ensures that the Hessian satisfies the so-called secant equation $\B_{t+1} \s_t  = \r_t$.
While BFGS avoids the heavy computational cost involved in the matrix inversion, it still has a quadratic memory requirement. The limited-memory BFGS (LBFGS) algorithm reduces the memory requirement of the BFGS algorithm from quadratic to linear by estimating the product of the inverse Hessian and the gradient vector $\z_t = \B_t^{-1} \g(\bb_t,\bT_t)$ without explicitly storing the Hessian~\cite{nocedal1980updating}. This makes use of the difference vectors $\r_t$ and $\s_t$ from the last $\tau$ iteration of the algorithm (Alg.~\ref{alg:algorithm_LBFGS}). Recently, an online LBFGS algorithm (oLBFGS)~\cite{mokhtari2015global} with global linear convergence guarantee and an incremental greedy BFGS algorithm (IGS)~\cite{gaoincremental} with non-asymptotic local superlinear convergence guarantee have been developed. However, both the online limited-memory and incremental BFGS algorithms fail to run on ultra-high dimensional data sets due to their linear memory requirement. This motivates the central question that we aim to address: how can we enjoy the benefits of second-order optimization for feature selection in sublinear memory?

\section{The \bears{} Algorithm}
Our feature selection algorithm, \bears{}, estimates the second-derivative of the loss function from sketched features. Instead of explicitly storing the product of the inverse Hessian and the gradient (done in oLBFGS), \bears{} maintains a Count Sketch $\mathcal{S}$ to store the feature weights in sublinear memory and time quadratic in the sparsity of the input data. The key insight is to update the Count Sketch by the sketch of the gradient corrected by the sketch of the difference vectors $\r_t$ and $\s_t$.
\begin{algorithm}[t]
   \caption{\bears{}}
   \label{alg:algorithm}
\begin{algorithmic}
   \STATE {\bf Initialize}: $t = 0$, Count Sketch $\bb_{t=0}^s = 0$, top-$k$ heap.
   \WHILE {stopping criteria not satisfied}
   \STATE 1. Sample $b$ independent data points in a minibatch  $\bT_t = \{ \T_{t1},\dots,\T_{tb} \}$. 
   \STATE 2. Find the active set $\mathcal{A}_t$. 
   \STATE 3. QUERY the feature weights in $\mathcal{A}_t \cap \text{top-}k$ from  Count Sketch $\bb_t = query(\bb_t^s )$.
   \STATE 4. Compute stochastic gradient $\g(\bb_t,\bT_t)$.
   \STATE 5. Compute the descent direction with Alg. \ref{alg:algorithm_LBFGS} 
   $\z_t$ = LBFGS($\g(\bb_t,\bT_t)$ , $\{ \s_i ,\r_i \}_{i={t-\tau+1}}^{t}$).
   \STATE 6. ADD the sketch of $\z_t$ at the active set $\hat{\z}_t = {\z_t}^{\mathcal{A}_t}$  to Count Sketch 
   $\bb_{t+1}^s \coloneqq \bb_{t}^s - \eta_t {\hat{\z}_t}^s $.
   \STATE 7. QUERY the features weights in $\mathcal{A}_t \cap \text{top-}k$ from   Count Sketch $\bb_{t+1} = query(\bb_{t+1}^s )$.
   \STATE 8. Compute stochastic gradient $\g(\bb_{t+1},\bT_t)$.
   \STATE 9. Set $\s_{t+1} = \bb_{t+1} - \bb_t$, and $\r_{t+1} =  \g(\bb_{t+1},\bT_t) - \g(\bb_t,\bT_t)$.
   \STATE 10. Update the top-$k$ heap.
   \STATE 11. $t = t + 1$.
   \ENDWHILE
   \STATE {\bf Return:} The top-$k$ heavy-hitters in Count Sketch. 
\end{algorithmic}
\end{algorithm}
As detailed in Alg.~\ref{alg:algorithm}, \bears{} first initializes Count Sketch with zero weights and a top-$k$ heap to store the top-$k$ features. In every iteration, it samples $b$ independent data points $\bT_t$ and identifies the active set $\mathcal{A}_t$, that is, the features that are present in $\bT_t$. It then queries Count Sketch to retrieve the feature weights that are in the intersection of the active set $\mathcal{A}_t$ and the top-$k$ heap and set the weights for the rest of the features to zero. Next, it computes the stochastic gradient $\g(\bb_t,\bT_t)$ and uses it along with the difference vectors $\r_i$ and $\s_i$ from the last $\tau$ iterations to find the descent direction $\z_t$ using the LBFGS algorithm detailed in Alg.~\ref{alg:algorithm_LBFGS}. Then, it adds the sketch of the descent direction $\z_t$ \emph{only at the features in the active set} $\hat{\z}_t = {\z_t}^{\mathcal{A}_t}$ to Count Sketch. \bears{} queries Count Sketch for the second time in order to update the difference vector $\r_{t+1} =  \g(\bb_{t+1},\bT_t) - \g(\bb_t,\bT_t)$ and uses the sketch vector $\hat{\z}_t$ to set $\s_{t+1}$. The difference vector $\r_{t+1}$ captures the changes in the gradient vector as the content of Count Sketch change \emph{over a fixed minibatch} $\bT_t$~\cite{mokhtari2015global}. Finally, \bears{} updates the top-$k$ heap with the active set $\mathcal{A}_t$ and moves on to the next iteration until the convergence criteria is met. To update the top-$k$ heap, \bears{} scans the features that have been changed in Count Sketch over the past iteration. If those features are already in the heap, it updates the values of those elements, and if the features are new, it inserts the new elements into the heap with a worst-case time complexity that grows logarithmic with the number of features $k$. In the rare scenario, where the intersection of the active set $\mathcal{A}_t$ and the top-$k$ heap is empty the gradient can still be non-zero and can change the feature weightings.

The most time-costly step of \bears{}, which computes the descent direction (step 5), is quadratic time in the sparsity of the data $| \mathcal{A}_t|$. Table~\ref{table:vectors} summarizes the worst-case memory complexity of the vectors involved in \bears{}. The dominant term is Count Sketch $\bb_t^s$ for which the memory complexity in terms of the top-$k$ value are established in Theorem~\ref{thm:count_sketch}. The memory requirement to store the auxiliary vector $\z_t$ is only a constant $\tau$ times larger than the size of the active set which is negligible in streaming data-sparse features compared to the size of Count Sketch (see section \ref{experiments}). \\

\noindent {\bf Convergence Analysis.} We now analyze the convergence of the algorithm. Consider the following standard assumptions \cite{mokhtari2015global} on the instantaneous functions $f(\bb, \bT)$ to prove the convergence of the \bears{} algorithm: 1) The instantaneous objective functions are twice differentiable with the instantaneous Hessian being positive definite, that is, the eigenvalues of the instantaneous Hessian satisfy $M_1\I \preceq \nabla^2_{\bb} f(\bb, \bT) \preceq M_2\I$, for some $0 < M_1 < M_2$. 2) The norm of the gradient of the instantaneous functions is bounded for all $\bb$, that is, $\E_{\bT}[\|  \g(\bb, \bT) \|^2~|~ \bb] \leq S^2$. 3) The step sizes $\eta_t$ are square-summable. More specifically, $\sum_{t=0}^{\infty} \eta_t = \infty \text{ and } \sum_{t=0}^{\infty} \eta_t^2 < \infty$. We prove the following theorem.

\begin{table}[t]
\begin{center}
    \begin{tabular}{ |c|c|c|c|c|c| } 
    \hline
    $\bb_t$ & $\s_t$ & $\r_t$ & $\z_t$ & $\bb_t^s$ & $\g(\bb_t,\bT_t)$   \\
    \hline
    $k$ & $2|\mathcal{A}_t|$ & $2|\mathcal{A}_t|$ & $2\tau |\mathcal{A}_t| $ & $|\mathcal{S}|$ & $|\mathcal{A}_t|$  \\
    \hline
    \end{tabular}
\end{center}
\vspace{-0.3cm}
\caption{Memory cost of the vectors in \bears{}.}
\label{table:vectors}
\end{table}
  
\begin{theorem}\label{thm:prob_conv}
Let $f(\cdot)$ and the step sizes $\eta_t$ satisfy the assumptions above. 
Let the size of Count Sketch be $m = \theta(\varepsilon^{-2}\log 1/\delta)$ with number of hashes $d = \theta(\varepsilon^{-1}\log 1/\delta)$ for $\varepsilon,\delta>0$.
Then, the Euclidean distance between updates $\bb_t^s$ in the \bears{} algorithm and the sketch of the solution of problem \eqref{eq:subset} converges to zero with probability $1-\delta$, that is, 
\begin{align}\label{eq:convergence}
\mathbb{P}(\lim_{t\rightarrow\infty}\|\bb_t^s - {\bs}^*\|^2 = 0) = 1 - \delta,
\end{align}
where the probability is over the random realizations of random samples $\{\bT_t\}_{t=0}^{\infty}$.
Furthermore, for the specific step size $\eta_t = \eta_0/(t+T_0)$ for some constants $\eta_0$ and $T_0$, the model parameters at iteration $t$ satisfy 
\begin{equation}
\E_{\bT}[f(\bs_t,\bT) - \E[ f({\bs}^\ast,\bT)] \leq \frac{C_0}{T_0+t},
\end{equation}
with probability $1-\delta$. Here, $C_0$ is a constant depending on the parameters of the sketching scheme, the above assumptions, and the objective function.
\end{theorem}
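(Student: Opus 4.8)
The plan is to reduce the statement to the global convergence analysis of oLBFGS in \cite{mokhtari2015global}, carried out not in the ambient space $\R^p$ but in the sketched space $\R^m$, and to show that the sketch-then-query composition of Count Sketch acts as an approximate identity on the subspace spanned by the top-$k$ coordinates. Concretely, I would model the linear sketching map as a matrix $\Phi\in\R^{m\times p}$ built from the hash and sign functions $h_j,s_j$, so that $\bb_t^s=\Phi\bb_t$, and model the query as its suitably normalized adjoint, giving a composite projection operator $P$ with $\E[P]=\I$ on every coordinate (the $i'\neq i$ cross-terms vanish because the signs are independent and mean-zero). The entire sketched dynamics $\bb_{t+1}^s=\bb_t^s-\eta_t\hat\z_t^s$ can then be rewritten as an oLBFGS step on the queried iterate $\bb_t$ perturbed by $P$, so the sketched update is an unbiased, lightly perturbed version of the ideal update.

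The first technical step is to control $P$. Writing $P$ as a sum of independent rank-one contributions (one per hash row, per active coordinate), I would apply the matrix Bernstein inequality to bound $\|P-\I\|$ restricted to the top-$k$ subspace. With the stated sizes $m=\theta(\varepsilon^{-2}\log 1/\delta)$ and $d=\theta(\varepsilon^{-1}\log 1/\delta)$, this yields $\|P-\I\|\le\varepsilon$ on the relevant subspace with probability at least $1-\delta$. Since the hash and sign functions are drawn once and held fixed for the entire run, this is a \emph{single} good event, so no union bound over iterations is needed: on this event every sketched gradient and descent direction is an $\varepsilon$-accurate image of its ambient counterpart, which is the operator-level counterpart of the entrywise guarantee in Theorem~\ref{thm:count_sketch}.

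Next I would verify that the curvature information used inside Alg.~\ref{alg:algorithm_LBFGS} stays well conditioned in the sketched domain. By Assumption~1 the ambient secant pairs $(\s_t,\r_t)$ satisfy $M_1\|\s_t\|^2\le\r_t^\top\s_t\le M_2\|\s_t\|^2$; passing these pairs through $P$ perturbs these inequalities by $O(\varepsilon)$, so on the good event the implicit LBFGS Hessian approximation $\B_t$ still obeys $\tilde M_1\I\preceq\B_t\preceq\tilde M_2\I$ for constants close to $M_1,M_2$. This two-sided bound guarantees that the sketched direction $\z_t=\B_t^{-1}\g(\bb_t,\bT_t)$ is a genuine descent direction, and I can reproduce the stochastic descent inequality of \cite{mokhtari2015global}: taking conditional expectation over $\bT_t$ and using Assumption~2 to bound $\E[\|\g\|^2]\le S^2$ gives
\begin{equation}\label{eq:desc}
\E[\|\bb_{t+1}^s-\bbss\|^2\mid\mathcal F_t]\le(1-c_1\eta_t)\|\bb_t^s-\bbss\|^2+c_2\eta_t^2,
\end{equation}
with $c_1,c_2>0$ absorbing $\tilde M_1,\tilde M_2,S$ and the $O(\varepsilon)$ sketching slack.

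Finally I would close the argument in two ways. For the almost-sure statement, \eqref{eq:desc} together with square-summability of $\eta_t$ (Assumption~3) makes $\|\bb_t^s-\bbss\|^2$ a nonnegative almost-supermartingale, so the Robbins--Siegmund supermartingale convergence theorem forces $\|\bb_t^s-\bbss\|^2\to0$ on the good event, establishing \eqref{eq:convergence} with probability $1-\delta$. For the rate, substituting $\eta_t=\eta_0/(t+T_0)$ into \eqref{eq:desc} and unrolling by induction (using $c_1\eta_0>1$) gives $\E[\|\bb_t^s-\bbss\|^2]\le C/(T_0+t)$, and $\tilde M_2$-smoothness in the sketched coordinates converts this into $\E[f(\bs_t,\bT)-f(\bbss,\bT)]\le C_0/(T_0+t)$. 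I expect the main obstacle to be the third step: showing that the secant pairs built from sketched, partially-queried iterates (the intersection $\mathcal A_t\cap\text{top-}k$ in Alg.~\ref{alg:algorithm}) still yield a uniformly positive-definite $\B_t$, since the sketching noise enters the curvature estimate multiplicatively and must be prevented from accumulating across the $\tau$-step LBFGS memory.
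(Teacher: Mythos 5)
Your overall skeleton (matrix Bernstein on the sketching operator, verify the oLBFGS regularity assumptions, then run the Mokhtari--Ribeiro supermartingale/rate machinery) matches the paper's, but your key reduction has a genuine gap. You frame the sketched dynamics as a \emph{perturbed} ambient oLBFGS iteration, resting on the claim that the sketch-then-query composition $P$ satisfies $\|P-\I\|\le\varepsilon$ ``on the relevant subspace'' with $m=\theta(\varepsilon^{-2}\log 1/\delta)$, so that every sketched direction is an $\varepsilon$-accurate image of its ambient counterpart. This step does not go through as stated: $P$ has rank at most $m\ll p$, so it cannot approximate the identity on the iterates, which are \emph{dense} in intermediate iterations (the paper stresses exactly this point); an approximate-identity guarantee could only hold restricted to a fixed low-dimensional subspace and would then require $m$ to scale with that subspace's dimension (and union bounds over the changing active sets), not the $k$-free sketch size in the theorem. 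The downstream consequence is the obstacle you yourself flag: propagating multiplicative $O(\varepsilon)$ sketching error through the secant pairs and the $\tau$-step LBFGS memory to keep $\B_t$ uniformly positive definite is not controlled by your argument.

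The paper avoids this entirely by proving an \emph{exact} equivalence rather than a perturbation bound: with the mean query operator, querying is the affine map $\bs\mapsto\S\bs$, so BEAR's update in the sketched domain is precisely oLBFGS applied to the sketched objective $\tfrac{1}{T}\sum_t f(\X_t\S\bs,\y_t)$, and ${\bs}^\ast$ in the theorem is the minimizer of \emph{that} problem (recovery in the ambient domain is deferred to the Corollary via Theorem~\ref{thm:count_sketch}). Matrix Bernstein is then needed only to bound the nonzero eigenvalues of $\S^T\S$, giving $\tfrac{p}{m}(1-\varepsilon)M_1\I\preceq\nabla^2_{\bs}f(\S\bs,\bT)\preceq\tfrac{p}{m}(1+\varepsilon)M_2\I$ and a correspondingly scaled gradient-moment bound, each on a single good event over the sketch randomness; Theorems 6 and 7 of the oLBFGS paper then apply verbatim (so the Robbins--Siegmund argument and the $\mathcal{O}(1/t)$ unrolling you propose to redo are already inside the cited result). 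To repair your proof, replace the approximate-identity/perturbation step with this exact reformulation in the sketched coordinates, or else strengthen the hypotheses so that a restricted-isometry-type statement for the dense iterates is actually available --- the latter is not what the theorem assumes.
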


The proof makes use of the matrix Bernstein inequality in projecting the second-order gradients in Count Sketch~\cite{kane2014sparser} which we defer to the Proofs section. For sufficiently sparse solutions the convergence in the ambient domain follows from convergence in the sketched domain (i.e., Theorem~\eqref{thm:prob_conv}) and the Count Sketch guarantee (i.e., Theorem~\eqref{thm:count_sketch}):

\begin{corollary}
Let $\pi(\cdot)$ be a permutation on $\{1,2,\cdots,p\}$ such that $\beta^*_{\pi(1)}\geq \beta^*_{\pi(2)}\geq \cdots \beta^*_{\pi(p)}$, where $\bb^\ast = [\beta^*_1,\beta^*_2, \cdots ,\beta^*_p]$ is the optimal solution to \eqref{eq:subset}. Also, let
\begin{align}
m = \max\left[\mathcal{O}\left(\text{log}(\frac{2p}{\delta}) \left(k + \frac{\|{\bb^\ast}^{tail}\|_2^2}{ (\varepsilon \zeta)^2 }\right) \right), \theta\left(\frac{1}{\varepsilon^{2}}\log \frac{2}{\delta}\right)\right]
\end{align}

\noindent and number of hashes $d = \theta(\varepsilon^{-1}\log 2/\delta)$ where $\varepsilon,\delta>0, \|{\bb^\ast}^{tail}\|_2^2 = \sum_{i =k+1}^p (\beta^\ast_{\pi(i)})^2 \textit{ and } \zeta = \beta^\ast_{\pi(k)}$. Then,
\begin{align}
|\beta^\ast_{\pi(i)} - {\beta_{t\pi(i)}}| \leq \epsilon \|\bb^\ast\|_2 ~~~ \text{  for all } i\in \{1,2,\cdots, k\} \text{ with probability } 1 - \delta,
\end{align}

\noindent where ${\bb_t} = [{\beta_{t1}},{\beta_{t2}}, \cdots ,{\beta_{tp}}]$ is the output of the \bears{} algorithm.
\end{corollary}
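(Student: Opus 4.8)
The plan is to combine the two guarantees already in hand---the sketched-domain convergence of Theorem~\ref{thm:prob_conv} and the Count Sketch recovery bound of Theorem~\ref{thm:count_sketch}---via a triangle inequality, and to reconcile their failure probabilities with a union bound over the random sketching structure. Both theorems hold with high probability over the random choice of the hash and sign functions that define Count Sketch, so I would split a total failure budget of $\delta$ into $\delta/2$ for each. This is precisely why the hypotheses carry $2/\delta$ and $2p/\delta$ inside the logarithms, and why $m$ is taken as the \emph{maximum} of the two requirements: a single random sketch must be simultaneously large enough for convergence (the $\theta(\varepsilon^{-2}\log(2/\delta))$ term inherited from Theorem~\ref{thm:prob_conv}) and large enough for top-$k$ recovery of $\bb^\ast$ (the Theorem~\ref{thm:count_sketch} term carrying $\log(2p/\delta)$).

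First I would fix the randomness and define two events. Let $\mathcal{E}_{\mathrm{conv}}$ be the Theorem~\ref{thm:prob_conv} event $\{\lim_{t\to\infty}\|\bb_t^s-\bbss\|^2=0\}$, which under the stated $m,d$ holds with probability at least $1-\delta/2$; let $\mathcal{E}_{\mathrm{rec}}$ be the event that Count Sketch recovers every top-$k$ coordinate of the \emph{fixed} vector $\bb^\ast$ to additive error $\varepsilon\|\bb^\ast\|_2$, which by Theorem~\ref{thm:count_sketch} applied to $\z=\bb^\ast$ (using the second term of the max) holds with probability at least $1-\delta/2$. A union bound yields $\mathbb{P}(\mathcal{E}_{\mathrm{conv}}\cap\mathcal{E}_{\mathrm{rec}})\ge 1-\delta$, and the rest of the argument is deterministic on this joint event.

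On $\mathcal{E}_{\mathrm{conv}}$, $\ell_2$-convergence of $\bb_t^s$ to $\bbss$ forces entrywise convergence of the sketch cells, and since querying coordinate $\pi(i)$ is a fixed median of $d$ signed cells, $\beta_{t\pi(i)}=\mathrm{query}_{\pi(i)}(\bb_t^s)\to\mathrm{query}_{\pi(i)}(\bbss)$. Because Count Sketch is linear, $\bbss$ is exactly the sketch of $\bb^\ast$, so on $\mathcal{E}_{\mathrm{rec}}$ the limiting query obeys $|\mathrm{query}_{\pi(i)}(\bbss)-\beta^\ast_{\pi(i)}|\le\varepsilon\|\bb^\ast\|_2$ for all $i\in\{1,\dots,k\}$. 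A triangle inequality, $|\beta^\ast_{\pi(i)}-\beta_{t\pi(i)}|\le|\beta^\ast_{\pi(i)}-\mathrm{query}_{\pi(i)}(\bbss)|+|\mathrm{query}_{\pi(i)}(\bbss)-\mathrm{query}_{\pi(i)}(\bb_t^s)|$, then gives the claim in the limit, with $\epsilon=\varepsilon$.

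The hard part will be the probabilistic bookkeeping rather than the analysis. Theorem~\ref{thm:count_sketch} certifies recovery of a \emph{fixed} target $\bb^\ast$, whereas Theorem~\ref{thm:prob_conv} governs a \emph{stochastic trajectory} $\{\bb_t^s\}$; both randomize over the same hashes and signs, so I must verify that the matrix-Bernstein event driving convergence and the collision event driving recovery live on the same probability space and can be intersected without introducing a dependence that invalidates the union bound. A secondary point is justifying the interchange of $\lim_t$ with the QUERY median---continuity of the median together with coordinatewise convergence of the sketch cells handles it---so that the limiting error inherits the Theorem~\ref{thm:count_sketch} bound exactly rather than up to vanishing slack.
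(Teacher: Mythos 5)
Your proposal matches the paper's own (sketched) argument for this corollary: intersect the Theorem~\ref{thm:prob_conv} convergence event with the Theorem~\ref{thm:count_sketch} recovery event via a union bound with budget $\delta/2$ each (which is precisely why the hypotheses carry $\log(2/\delta)$, $\log(2p/\delta)$, and the maximum of the two sketch-size requirements), then pass to the limit and apply the triangle inequality. Your identification of $\bbss$ with the sketch of $\bb^\ast$ is the same implicit step the paper covers with its ``for sufficiently sparse solutions'' caveat, so your route is essentially identical to the paper's.
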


\noindent This completes the convergence proof of \bears{} in sublinear memory in the ambient space.

\vspace{-0.2cm}
\section{Simulations}
\vspace{-0.2cm}
We have conducted sparse recovery simulations to evaluate the performance \bears{} compared to the first-order feature selection algorithm in ultra-high dimension MISSION~\cite{aghazadeh2018mission}. MISSION is one of the only first-order optimization algorithms that enables feature selection in a memory cost that grows sublinearly with the size of the feature vector and as a result runs in the scale of problems that we are considering in this paper. In addition, comparing the accuracy of BEAR with MISSION as a baseline shows the power of sketching the second-order gradients (done in BEAR) against sketching first-order gradients (done in MISSION). The synthetic simulations described in this section have ground truth features, so we can assess the algorithms in a more controlled environment and compare the results using a variant of the phase transition plot from the compressive sensing literature~\cite{maleki2010optimally}. We also show the results of the full Newton's method version of our \bears{} algorithm where we compute the Hessian rather than its oLBFGS approximation (this algorithm cannot operate in large-scale settings). 
The same hash table (hash functions and random seeds) and step sizes are used for \bears{} and MISSION. Hyperparameter search is performed to select the value of the step sizes in both algorithms.
The entries of the data vectors $\x_i$ are sampled from an i.i.d. Gaussian distribution with zero mean and unit variance. The output labels $y_i$ are set using a linear forward model $y_i = \x_i \bb^\ast$, where $\bb^\ast$ is a $k$-sparse ground truth feature vector. The indices of the support (i.e., the non-zero entries) and the weights of the non-zero entries in $\bb^\ast$ are drawn uniformly at random respectively from the sets $[1,p]$ and $[0.8,1.2]$ and MSE is used as the loss function. The same experiment is repeated $200$ times with different realization of the data vectors $\x$. Convergence at iteration $t$ is reached when the norm of the gradient drops below $10^{-7}$ consistently in all the algorithms. The algorithms are compared in terms of the accuracy in selecting the ground truth features as well as the sensitivity of the algorithms to the choice of the value of the step size. 

{\bf Feature Selection Accuracy.} The task is to select $k = 8$ features in a data set with $n = 900$ rows (data points) and $p = 1000$ columns (features). The size of Count Sketch is varied from $10\%$ to $60\%$ of the total memory required to store a $p=1000$ dimensional feature vector. This ratio, that is the ratio of data dimension $p$ to Count Sketch size, is called the compression factor. For each value of the compression factor, the experiment is repeated $200$ times. Fig.~\ref{fig:sweep_hash_size}A shows the fraction of iterations in which the algorithms find \emph{all} the ground truth features correctly, that is, the probability of success. 
Fig.~\ref{fig:sweep_hash_size}B illustrates the same results in terms of the average $\ell_2$-norm of the error of the recovered feature vectors $\| {\bb}_t-\bb^\ast \|_2$. \bears{} significantly outperforms MISSION in terms of the probability of success and the average $\ell_2$-norm error. The gap is more pronounced in higher compression factors; given a compression factor of $3$, MISSION has almost no power in predicting the correct features while the \bears{} and Newton's methods achieve a $0.5$ probability of success. Fig.~\ref{fig:sweep_hash_size}A and B further suggest that the performance gap between \bears{} and it's exact Hessian counterpart is small showing that the oLBFGS makes a good approximation to the Hessian in terms of the selected features.

\begin{figure*}[t]
\vspace{-0.7cm}
\centering
\includegraphics[width=1\textwidth]{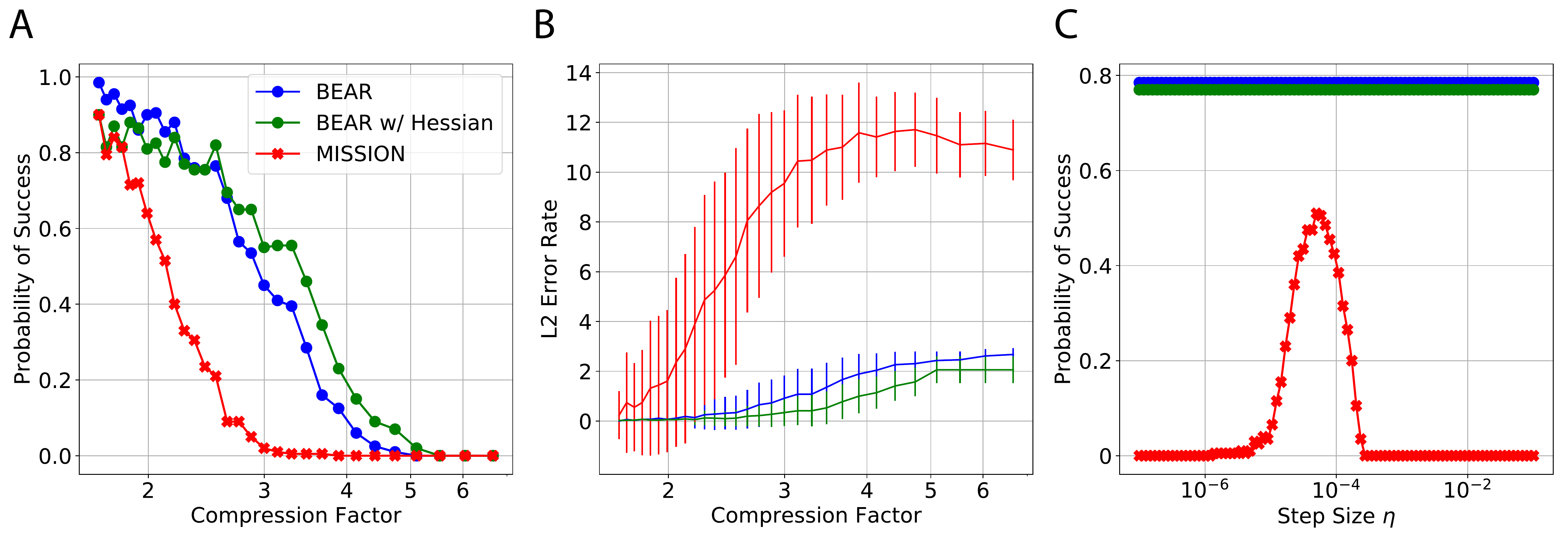}
\vspace{-0.6cm}
\caption{Feature selection experiments on $p=1000$-dimensional synthetic data sets with entries drawn from normal distribution. A) Probability of success in recovering all features correctly as a function of compression factor. B) Recovery error rate in terms of the $\ell_2$-norm. C) Probability of success as a function of the value of the step size (compression factor $ = 2.22$).}
\label{fig:sweep_hash_size}
\vspace{-0.6cm}
\end{figure*}

{\bf Sensitivity to Step Size.} The experimental setup is similar to the previous section except the Sketch size is fixed and step size varies. The experiment is repeated $200$ times while varying the values of the step size $\eta$ ranging from $10^{-7}$ to $10^{-1}$ and the probability of success is reported. Count Sketch of size $150\times 3$ is used for both MISSION and \bears{}. Fig.~\ref{fig:sweep_hash_size}C illustrates the probability of success for \bears{} and MISSION as a function of the step size. The plot shows that \bears{} is fairly agnostic and MISSION is dependent on the choice of the step size $\eta$. MISSION's accuracy peaks around $\eta=10^{-4}$ and sharply drops as $\eta$ deviates from this value. \bears{}'s lower-dependence on step size is ideal for streaming settings where the statistics of the data might change over time and there is not enough time and memory budget to do step size selection.

\vspace{-0.3cm}
\section{Experiments}\label{experiments}
\vspace{-0.2cm}
We designed the experiment in a way to answer the following questions:
\begin{itemize}
\item
Does \bears{} outperform MISSION in terms of classification accuracy? In particular, how does the performance gap between the algorithms change as a function of the memory allocated for Count Sketch? 
\item
How does \bears{} perform on real-world large-scale data sets ($p$ > 50 million)?
\item
How does \bears{} perform in terms of classification accuracy compared to FH? 
\item
How does changing the number of top-$k$ features affect the accuracy of the feature selection algorithms? 
\item
What is the convergence behaviour of \bears{} when the memory budget is small? 
\item
What is the run time of \bears{} compared to MISSION?
\end{itemize}

We compare the performance of these baseline algorithms with \bears{}: 1) {\bf Stochastic Gradient Descent (SGD)}: For data sets with sufficiently small dimension and size to be able to train a classifier on our laptop machine, we perform the vanilla SGD algorithm (with $\mathcal{O}(p)$ memory). 2) {\bf oLBFGS}: Similar to SGD, for the data sets that the dimension and size allows to train a classifier on our laptop machine, we perform the vanilla oLBFGS algorithm (neither SGD nor the oLBFGS techniques do feature selection or model compression). 3) {\bf Feature Hashing (FH)}: FH~\cite{weinberger2009feature} is a standard algorithm to do prediction (classification) in large-scale machine learning problems. FH hashes the data features into a lower dimensional space \emph{before} the training process and is not a feature selection algorithm. 4) {\bf MISSION}: As mentioned earlier, MISSION is a first-order optimization algorithm for feature selection which sketches the noisy stochastic gradients into Count Sketch. \\

\noindent {\bf Performance Metrics.} The algorithms are assessed in terms of the following performance metrics: 
1)~{\bf Classification accuracy}: Once the algorithms converge, the performance of the algorithms in terms of classification accuracy are compared, that is, the fraction of test samples that are classified to correct classes.
2)~{\bf Area under the ROC curve (AUC)}: For the data sets that the class distribution are highly skewed the area under the ROC curve (AUC) is reported instead of the classification accuracy. In these data sets, the class probabilities are taken as the output of the classifiers. 
3)~{\bf Compression factor (CF)}: The compression factor is defined as the dimension of the data set $p$ divided by the size of Count Sketch $m$. For multi-class classification problems, $m$ is the total memory of all the Count Sketches used for all the classes. A higher compression factor means a smaller memory budget is allocated to store the model parameters. SGD and oLBFGS have a compression factor of one. 4)~{\bf Run time}: The run time of the algorithms to converge in minutes. \\

\noindent {\bf Real-World Data sets.} The key statistics of the data sets used in the paper are tabulated in Table~\ref{table:datasets} including the dimension of the data set ($p$), number of training data ($n$), number of test data, total size of the data set, and the average number of active (non-zero) features per data point. All the data is analyzed in the Vowpal Wabbit format.

\begin{table} [t]
\small
\vspace{-0.3cm}
\begin{center}
    \begin{tabular}{ |c|c|c|c|c|c| } 
    \hline
    Data set & Dim ($p$) & \#Train ($n$) & \#Test & Size & \#Act. \\
    \hline
    RCV1 & 47,236 & 20,242 & 677,399 & 1.2GB & 73 \\
    Webspam & 16,609,143 & 280,000 & 70,000 & 25GB & 3730 \\
    DNA & 16,777,216 & 600,000 & 600,000 & 1.5GB & 89  \\
    KDD 2012 & 54,686,452 & 119,705,032 & 29,934,073 & 22GB & 12 \\
    \hline
    \end{tabular}
\end{center}
\vspace{-0.4cm}
\caption{Summary of the real-world data sets.}
\vspace{-0.3cm}
\label{table:datasets}
\end{table}

1) {\bf RCV1: Reuters Corpus Volume I.} RCV1 is an archive of manually categorized news wire stories made available by Reuters, Ltd. for research purposes. The negative class label includes Corporate/Industrial/Economics topics and positive class labels includes Government/Social/Markets topics (see \cite{lewis2004rcv1}). The data set is fairly balanced between the two classes. 

2) {\bf Webspam: Web Spam Classification.} Web spam refers to Web pages that are created to manipulate search engines and Web users. The data set is a large collection of annotated spam/nonspam hosts labeled by a group of volunteers (see \cite{webb2006introducing}). It is slightly class-imbalanced with $60\%$ samples from class 1. 

3) {\bf DNA: Metagenomics.} A data set that we dub ``DNA'' from metagenomics. Metagenomics studies the composition of microbial samples collected from various environments (for example human gut) by sequencing the DNA of the living organisms in the sample. The data set comprises of short DNA sequences which are sampled from a set of $15$ DNA sequences of bacterial genomes. The task is to train a classifier to label the DNA sequences with their corresponding bacteria. DNA sequences are encoded using their constituent sub-sequences called $K$-mers (see \cite{vervier2016large}). The training and test data have an equal number of samples for each class. A naive guessing strategy achieves a classification accuracy of $0.06$. 

4) {\bf KDD Cup 2012: Click-Through Rate Prediction.} A key idea in search advertising is to predict the click-through rate (pCTR) of ads, as the economic model behind search advertising requires pCTR values to rank ads and to price clicks. The KDD Cup 2012 data set comprises training instances derived from session logs of the Tencent proprietary search engine (see \cite{juan2016field}). The data set is highly class-imbalanced with $96\%$ samples from class 1 (click). \\

\noindent {\bf Multi-class Extension.} For the multi-class classification problems stated above, we developed a multi-class version of the BEAR algorithm. In the multi-class problem one natural assumption is that there are separate subsets of features that are most predictive for each class. Our multi-class BEAR algorithm accommodates for this by maintaining a separate Count Sketch and heap to store the the top-$k$ features associated with each class. The total memory complexity of the algorithm grows linearly with the number of classes. For a fair comparison, we use the exact same multi-class Count Sketch extension for MISSION. We have also implemented the single Count Sketch version of BEAR, however, since the multi-class Count Sketch extension performs better for feature selection we report the results of the former in our experiments.  \\

\noindent {\bf Experimental Setup.} MurmurHash3 with 32-bit hash values is used to implement the hash functions in MISSION, BEAR, and FH. The algorithms are trained in a streaming fashion using the cross entropy loss. The algorithms are run for a single epoch so that each algorithm sees a data point once on average. The size of the minibatches and the step size are kept consistent across the algorithms. The constant $\tau = 5$ in \bears{} however the results are consistent across  a large range of values for $\tau$. Both in \bears{} and MISSION a Count Sketch with 5 rows (hash functions) is used. The lower dimensional embedding size of FH is set equal to the total size of Count Sketch in \bears{}. The experiments are performed on a single laptop machine - 2.4 GHz Quad-Core Intel Core i5 with 16 GB of RAM. 
We chose an edge device as opposed to a computing server for our real-world experiments to showcase the applicability of \bears{} in a resource constrained environment. \\

\begin{figure*}[t]
\vspace{-0.8cm}
\centering
\includegraphics[width=1\textwidth]{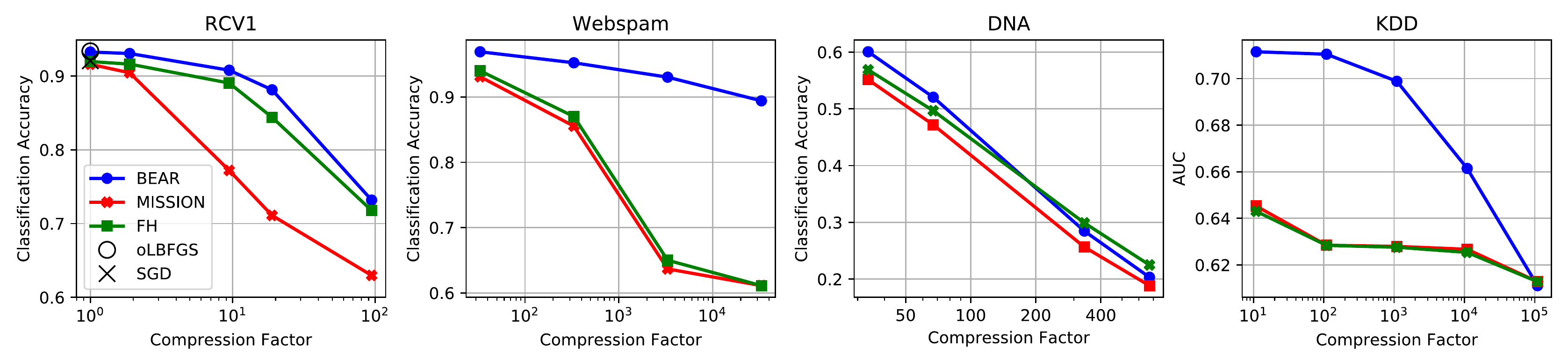}
\vspace{-0.7cm}
\caption{Classification performance as a function of the compression factor in real-world data sets.}
\vspace{-0cm}
\label{fig:accuracy-compression}
\end{figure*}

\begin{figure*}[t]
\vspace{-0.2cm}
\centering
\includegraphics[width=1\textwidth]{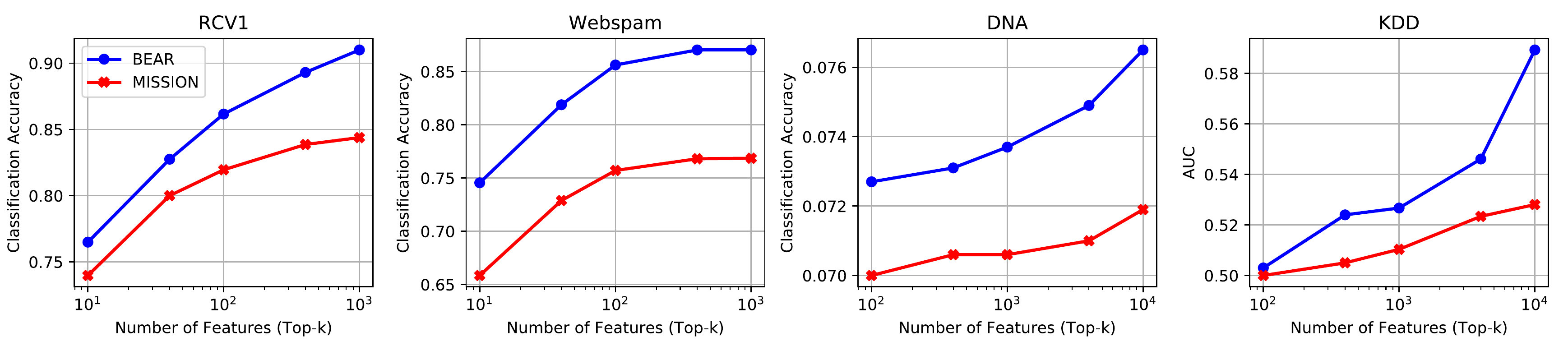}
\vspace{-0.7cm}
\caption{Classification performance as a function of the number of the top-$k$ features.}
\label{fig:topk}
\vspace{-0.4cm}
\end{figure*}

\noindent {\bf Result I) Classification Performance vs. Compression Factor.} We assess the classification performance of \bears{} compared to the baseline algorithms for different compression factors in Fig.~\ref{fig:accuracy-compression}. All the active features in the test data are used at the inference step for a fair comparison with FH. 
\bears{}'s classification performance is consistently better than MISSION and FH across all the data sets over a wide range of compression factors while showing a hysteresis behaviour: the performance gap increases as the compression factor grows until Count Sketch is too small to yield any prediction power.
The classification performance of all algorithms degrades with larger compression factors, which is expected since lower Count Sketch sizes increase the probability of collisions in both \bears{} and MISSION. The degradation, however, impacts MISSION significantly more that \bears{}. In particular, \bears{}'s performance stays relatively robust for compression rates in the range of $1-10$ in RCV1, $1-1000$ in Webspam, and $10-100$ in KDD, while the classification performance of MISSION drops rapidly. The increasing performance gap between \bears{} and MISSON with compression factor highlights the unique advantage of \bears{} in storing the second-order steps in Count Sketch and lowering the probability of collisions. Note that this performance gap is less pronounced in the DNA data set while the general trend still follows the other data sets. This is because the DNA data set has $15$ balanced classes and its $K$-mer features have relatively more distributed information content compared to the features in the other data sets, which poses a harder feature selection task for the algorithms. \\

\noindent {\bf Result II) Classification Performance vs. Top-$k$ Features.} We assess the performance of \bears{} in terms of the classification accuracy against the number of selected top-$k$ features. The compression factors are fixed to $10$, $330$, $330$, and $1100$ respectively for the data sets in Fig.~\ref{fig:topk}. SGD, oLBFGS, and FH cannot select features, therefore, they are not included in this analysis. The plots shows that \bears{} selects features that are better in terms of prediction accuracy for a wide range of values of $k$. The gap grows for larger $k$. We analyzed the selected features in RCV1 for which a proper documentation of the features is publicly available (unlike the other data sets). Some of the selected features are shared among the algorithms, for example, ``shareholder'', ``nigh'', and ``company'', which can be attributed to the Markets, Social, and Industrial subjects, respectively. Other terms, however, are uniquely chosen by one of the algorithms as tabulated in Table~\ref{table:interpretation}. Compared to \bears{}, the terms selected by MISSION are less frequent (e.g., ``peach'') and do not discriminate between the subject classes (e.g., ``incomplete''). \\

\begin{table} [t]
\vspace{-0.7cm}
\begin{center}
    \begin{tabular}{ |c|ccccc| } 
    \hline
    \bears{} & manage & entrepreneur & colombian & decade & oppress    \\
    \hline
    MISSION & peach & week & nora & demand & incomplete  \\
    \hline
    \end{tabular}
\end{center}
\vspace{-0.6cm}
\caption{Examples of the features selected in RCV1.}
\vspace{-0cm}
\label{table:interpretation}
\end{table}

\begin{table}[t]
\begin{center}
    \begin{tabular}{ |c|c|c|c|c| } 
    \hline
    data set (CF) & RCV1 ($95$)  & Webs ($332$) & DNA ($22$) & KDD ($10^3$)    \\
    \hline
    BEAR & $0.1$ & $5$ & $26$ & $25$  \\
    MISSION & $0.3$ & $19$ & $55$  & $33$  \\
    \hline
    \end{tabular}
\end{center}
\vspace{-0.6cm}
\caption{Overall run time comparison (minutes).}\label{table:runningtime}
\vspace{-0.2cm}
\end{table}

\noindent {\bf Result III) Run Time.} We compare the overall run time of \bears{} with MISSION in Table~\ref{table:runningtime}. \bears{} is significantly faster than MISSION consistently in all the data sets; \bears{} makes a better use of the data by estimating the curvature of the loss function and converges faster.

\sloppy

\vspace{-0.3cm}
\section{Discussion and Conclusion}
\vspace{-0.2cm}

We have developed \bears{}, which to the best of our knowledge is the first second-order optimization algorithm for ultra-high dimensional feature selection in sublinear memory.  Our results demonstrate that \bears{} has up to three orders of magnitude smaller memory footprint in feature selection compared to the first-order sketching algorithms with comparable (and sometimes superior) run time. We showed that the benefits of BEAR is far more pronounced while sketching into lower-dimensional subspaces, which is due to the more accurate decent directions of second-order gradients resulting in less collision-causing noise in Count Sketch. The implications of memory-accuracy advantage of second-order methods goes beyond hashing and streaming and can be applied to improve the communication-computation trade-off in distributed learning in communicating the sketch of the stochastic gradients between nodes~\cite{ivkin2019communication,gupta2019oversketched}. Moreover, while we laid out the algorithmic principles in sketching second-order gradient for training ultra-high dimensional generalized linear classifiers with theoretical guarantees, similar algorithmic principles can be used in sketching nonlinear models such as deep neural networks on lower-dimensional data sets~\cite{yao2020adahessian}. We believe that our work will open up new research directions towards understanding the benefits of second-order optimization in training massive-scale machine learning models in memory-constrained environments. \\


\noindent {\bf Proofs} \\

{\bf Theorem 2. \label{thm:norm_error}} Before stating the proof, for more clarity, we will reiterate the problem setup and our assumptions from the main paper here.
We are interested in solving the following problem using \bears{} 
\begin{equation}\label{orig_prob}
\bb^{\ast} \coloneqq \argmin_{\bb\in \R^p} f(\bb, \bT) =
\argmin_{\bb\in \R^p} \frac{1}{T}\sum_{t=1}^T{f(\bb,\T_t)}
= \argmin_{\bb\in \R^p} \frac{1}{T}\sum_{t=1}^T{f(\X_t\bb,\y_t)},  
\end{equation}
where $\bT = \{ \T_1, \T_2, \cdots, \T_T\}$ and $\T_t = (\X_t,\y_t)~\forall~t\in [1,T].$ We make the following standard assumptions \cite{mokhtari2015global}:
\begin{enumerate}
    \item 
The instantaneous objective functions, $f(\cdot)$, in Eq.~\eqref{orig_prob} are twice differentiable with the instantaneous Hessian being positive definite. That is, the eigenvalues of the instantaneous Hessian satisfy
    \begin{align}\label{assump1}
        M_1\I \preceq \nabla^2_{\bb}  f(\bb, \bT) \preceq M_2\I,
    \end{align}
    for some $0 < M_1 \leq M_2$.
\item The norm of the gradient of the instantaneous functions $f(\cdot)$ in Eq. \ref{orig_prob} is bounded for all $\bb$, that is
\begin{align}\label{assump2}
    \E_{\bT}[\|  \g(\bb, \bT) \|^2~|~ \bb] \leq S^2.
\end{align}
\item The step-sizes $\eta_t$ are square-summable. More specifically,
\begin{equation}\label{ss_cond}
    \sum_{t=0}^{\infty} \eta_t = \infty \text{ and } \sum_{t=0}^{\infty} \eta_t^2 < \infty.
\end{equation}
\end{enumerate}

\begin{lemma}
The solution of problem in \eqref{orig_prob} using \bears{} (or its first-order variant MISSION) is equivalent to the solution of the following problem in the sketched domain,
\begin{equation}\label{comp_prob}
{\bs}^{\ast} \coloneqq \argmin_{\bs\in \R^m} \frac{1}{T}\sum_{t=1}^T {f(\X_t\S\bs,\y_t)},
\end{equation}
where multiplication by $\S \in \R^{p\times m}$ is the linear projection operator in Count Sketch and $\bs \in \R^m$ is the projected model parameters.
\end{lemma}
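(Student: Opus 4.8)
The plan is to show that, although \bears{} only ever stores and updates the $m$-dimensional sketch $\bb_t^s$, its iteration is \emph{exactly} the online LBFGS recursion for the reparametrized finite-sum objective $h(\bs)=F(\S\bs)=\frac{1}{T}\sum_{t=1}^T f(\X_t\S\bs,\y_t)$, where $F(\bb)=\frac{1}{T}\sum_t f(\X_t\bb,\y_t)$; its limit is then forced to be the minimizer $\bbss$ of \eqref{comp_prob}. The first step is to record that every Count Sketch primitive is linear and governed by the single matrix $\S\in\R^{p\times m}$: the ADD operation accumulates $\S^T v$ into the stored counters for any ambient increment $v\in\R^p$, while the (sign-corrected, mean-based) QUERY returns the ambient vector $\bb_t=\S\bb_t^s$. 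These two identities, $\S^T$ for sketching and $\S$ for querying, are the only structural facts I would need; the rest is chain-rule bookkeeping.

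Next I would establish the gradient correspondence. By the chain rule the gradient of the reparametrized per-minibatch loss in the sketched coordinates is $\nabla_{\bs} f(\X_t\S\bs,\y_t)=\S^T\X_t^T f'(\X_t\S\bs)$, which is precisely $\S^T\g(\bb_t,\bT_t)$ evaluated at $\bb_t=\S\bb_t^s$. Hence the sketched increment that \bears{} adds in step~6 of Alg.~\ref{alg:algorithm} is nothing but the $\bs$-gradient of $h$, and for the first-order variant (MISSION, where the descent direction is the raw gradient) the update collapses to $\bb_{t+1}^s=\bb_t^s-\eta_t\nabla_{\bs} f(\X_t\S\bb_t^s,\y_t)$, i.e.\ plain stochastic gradient descent on \eqref{comp_prob}. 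Taking expectations at a stationary point gives the optimality condition $\S^T\nabla_{\bb} F(\S\bbss)=0$, which is exactly $\nabla_{\bs} h(\bbss)=0$; strong convexity of $h$, inherited from the Hessian bound \eqref{assump1} once $\S$ has full column rank (so $\nabla^2_{\bs} h\succeq M_1\S^T\S\succ 0$), makes this root unique and pins the limit to $\bbss$.

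For the full second-order algorithm I would push this identity through the two-loop recursion of Alg.~\ref{alg:algorithm_LBFGS}, which never forms an inverse Hessian explicitly but returns $\z_t$ as a linear combination of $\g(\bb_t,\bT_t)$ and the stored pairs $\{\s_i,\r_i\}$ with scalar coefficients built only from the inner products $\r_t^T\s_t$, $\s_i^T\q_i$ and $\r_i^T\z_i$. Because these curvature inner products are invariant under the sketch, e.g.\ $\r_i^T\s_i=(\S^T\r_i)^T(\bb_{i+1}^s-\bb_i^s)$, they coincide with the scalars produced by oLBFGS run directly on $h$ with pairs $(\tilde{\s}_i,\tilde{\r}_i)=(\bb_{i+1}^s-\bb_i^s,\,\S^T\r_i)$. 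Sketching the resulting ambient direction, $\S^T\z_t$, then reproduces the sketched-domain oLBFGS step up to the factor $\S^T\S$ that multiplies the $\s_i$-contributions, so that (modulo that factor) the per-iteration map $\bb_t^s\mapsto\bb_{t+1}^s$ of \bears{} agrees with one oLBFGS iteration on \eqref{comp_prob}; since \bears{} is initialized at $\bb_0^s=0$, the two iterate sequences coincide and share the limit $\bbss$, with MISSION the special case in which the preconditioner is the identity.

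The hard part will be the three places where the correspondence is only approximate. First, QUERY actually returns a coordinate-wise \emph{median} over the $d$ hash rows, which is nonlinear; I would argue that in the regime where the recovery guarantee of Theorem~\ref{thm:count_sketch} holds the median agrees with the linear sign-corrected-mean decoder, so that $\bb_t=\S\bb_t^s$ holds with the stated probability $1-\delta$. Second, sketching the descent direction introduces the factor $\S^T\S$ on the $\s_i$-terms, and identifying it with the identity requires the Count Sketch near-isometry, i.e.\ that the non-zero eigenvalues of the projection operator $\S^T\S$ concentrate around one; this is exactly where the matrix Bernstein inequality enters and is the technical heart deferred to the proof of Theorem~\ref{thm:prob_conv}. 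Third, \bears{} restricts every query and update to the active set $\mathcal{A}_t$ intersected with the top-$k$ heap, so I would verify that this restriction, in expectation over the minibatches, still covers every coordinate and hence leaves the stationary point of $h$, and therefore the solution $\bbss$, unchanged.
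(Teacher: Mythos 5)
Your proposal follows essentially the same route as the paper's proof: exploit the linearity of the Count Sketch primitives (ADD acts as $\S^T$, the mean-based QUERY acts as $\S$) so that, by the chain rule, the sketched iteration is exactly stochastic gradient descent (MISSION) or oLBFGS (\bears{}) applied to the reparametrized objective $\frac{1}{T}\sum_t f(\X_t\S\bs,\y_t)$, whose minimizer is $\bbss$ --- indeed the paper's own argument is terser than yours, dispatching the second-order case by simply observing that \bears{} is a second-order variant of MISSION and hence targets the same compressed problem, while you additionally trace the two-loop recursion and flag the median-vs-mean query and active-set caveats. One small correction to that extra detail: by the paper's own matrix Bernstein computation $\E[\S^T\S]=\frac{p}{m}\I$, so the non-zero eigenvalues of $\S^T\S$ concentrate around $p/m$ rather than around one; this is only a uniform rescaling (absorbed into the step size/preconditioner) and does not move the stationary point, so your conclusion is unaffected.
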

\begin{proof}
Let the update for online gradient descent  for the original problem in Eq. \eqref{orig_prob} be given by
\begin{align}
    \bb_{t+1} =\bb_t - \eta_t\nabla f(\X\bb_t, \y_t) 
\end{align}
For BEAR/MISSION type algorithms, the model parameters are stored in a Count Sketch based hash table. The compressed vector can be represented by an affine transformation as $\bs_t = \S^T\bb_t$, where $\S\in \R^{p\times m}$ is the Count Sketch matrix \cite{kane2014sparser}. While updating the model, the indices corresponding to the non-zero values in the gradient (the oLBFGS update in case of BEAR) are updated by querying Count Sketch. For Count Sketch with mean query operator, the update for MISSION can be written as 
\begin{align}
    \bs_{t+1} =\bs_t - \eta_t\S^T\nabla f(\X \mathcal Q(\bs_t),\y_t) 
\end{align}
where $\mathcal Q(\cdot)$ is the query function and $\bs_t = \S^T\bb_t$ is the sketched model parameter vector. When the query is the mean operator, the $\mathcal{Q}(\cdot)$ is the affine transformation $\mathcal{Q}(x) = \S\x$ for any $\x \in \R^m$ \cite{kane2014sparser,woodruff2014sketching}. Thus, the MISSION update equation is given by
\begin{align}\label{mission_update}
    \bs_{t+1} =\bs_t - \eta_t\S^T\nabla f(\X\S\bs_t,\y_t). 
\end{align}
The gradient for the problem in Eq. \eqref{comp_prob} is given by $\nabla f_{\bs}(\cdot) = \S^T\nabla f(\cdot)$. Hence, its online gradient descent update is the same as MISSION's update in Eq. \eqref{mission_update}. Since BEAR is a second-order variant of MISSION, it attempts to solve the same problem as MISSION. Next, we show that it indeed solves the problem with high probability at a linear convergence rate.  
\end{proof}

Now, to show that \bears{} converges to $\bbss$, we first need to show that the problem in Eq. \eqref{comp_prob} also satisfies the assumptions in Eq. \eqref{assump1} and \eqref{assump2} (albeit with different constants). Then, we can invoke the convergence guarantees for oLBFGS from \cite{mokhtari2015global} to show that \bears{} converges at a linear rate.

\begin{lemma}
Assume Count Sketch has a size $m=\Theta(\varepsilon^{-2}\log 1/\delta)$ with the number of hashes $d=\Theta(\varepsilon^{-1}\log1/\delta)$. If the instantaneous function $ f(\bb, \bT)$ satisfy Assumptions 1 and 2 (Eq. \eqref{assump1} and \eqref{assump2}, respectively), then, the corresponding instantaneous function for the sketched problem $ f(\S\bs, \bT)$ also satisfy
\begin{align}
       \frac{p}{m}(1 - \varepsilon)M_1\I \preceq \nabla^2_{\bs} f(\S\bs, \bT) \preceq \frac{p}{m}M_2(1 + \varepsilon)\I, \label{assump:eig_sketched}\\
       \E_{\T}[\| \nabla_{\bs} f(\S\bs, \bT) \|^2~|~ \bb] \leq \frac{p}{m}M_2(1 + \varepsilon) S^2,
    \end{align}
with probability $1-\delta$ each.
\end{lemma}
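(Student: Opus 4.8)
The plan is to reduce the entire lemma to a single spectral statement about the Count Sketch matrix $\S$ and then propagate it through the chain rule. First I would record the two identities relating the sketched derivatives to the ambient ones: writing $\bb = \S\bs$, the chain rule gives $\nabla_{\bs} f(\S\bs,\bT) = \S^T \g(\bb,\bT)$ and $\nabla^2_{\bs} f(\S\bs,\bT) = \S^T \nabla^2_{\bb} f(\bb,\bT)\,\S = \S^T \H \S$, where $\g$ and $\H$ are the ambient gradient and Hessian at $\bb=\S\bs$. Assumption~1 supplies $M_1\I \preceq \H \preceq M_2\I$ and Assumption~2 supplies $\E_{\bT}[\|\g\|^2] \le S^2$, so the lemma becomes purely a question of how the congruence $(\cdot)\mapsto \S^T(\cdot)\S$ and the map $\g \mapsto \S^T\g$ distort the spectrum and norms.

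The heart of the argument is the oblivious subspace embedding property of Count Sketch, which I would establish via the matrix Bernstein inequality \cite{kane2014sparser}. The key observation is that, under the standard $1/\sqrt{d}$ normalization, $\E[\S^T\S] = \tfrac{p}{m}\I_m$: each diagonal entry is a rescaled count of the features hashed into a given bin, whose expectation is $(p/c)/d = p/m$, while the independent random signs annihilate all off-diagonal expectations. I would then write $\S^T\S - \E[\S^T\S] = \sum_{i=1}^p \Z_i$ as a sum of $p$ independent, zero-mean $m\times m$ matrices, one per feature, each supported on the $d$ bins that feature $i$ occupies. Controlling the per-term operator norm (governed by the column sparsity $d$) and the matrix variance, matrix Bernstein yields $(1-\varepsilon)\tfrac{p}{m}\I_m \preceq \S^T\S \preceq (1+\varepsilon)\tfrac{p}{m}\I_m$ with probability $1-\delta$ once $m = \Theta(\varepsilon^{-2}\log 1/\delta)$ and $d = \Theta(\varepsilon^{-1}\log 1/\delta)$. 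I expect this Bernstein bookkeeping --- pinning down the $p/m$ scaling, matching the per-term and variance parameters to the stated dependence of $m$ and $d$ on $(\varepsilon,\delta)$ --- to be the main obstacle; everything else is essentially free once it holds.

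Conditioned on this single high-probability event, both conclusions drop out. For the Hessian, congruence by $\S$ preserves the semidefinite order, so $M_1\,\S^T\S \preceq \S^T\H\S \preceq M_2\,\S^T\S$; substituting the spectral bound on $\S^T\S$ gives exactly $\tfrac{p}{m}(1-\varepsilon)M_1\I \preceq \nabla^2_{\bs}f(\S\bs,\bT) \preceq \tfrac{p}{m}(1+\varepsilon)M_2\I$. For the gradient I would use that $\S^T\S$ and $\S\S^T$ share their nonzero eigenvalues, whence $\|\S^T\g\|^2 = \g^T\S\S^T\g \le \lambda_{\max}(\S^T\S)\,\|\g\|^2 \le (1+\varepsilon)\tfrac{p}{m}\|\g\|^2$; taking the conditional expectation and invoking Assumption~2 bounds $\E_{\bT}[\|\nabla_{\bs}f\|^2]$ by $(1+\varepsilon)\tfrac{p}{m}S^2$, matching the stated bound with the $M_2$ retained as a conservative constant on the operator-norm factor. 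Since both statements follow from the one embedding event, the ``probability $1-\delta$ each'' claim holds immediately, and a union bound would deliver them simultaneously at the cost of only a constant factor in $m$ and $d$.
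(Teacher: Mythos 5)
Your proposal is correct and shares its core with the paper's proof: both reduce the lemma via the chain rule to the identities $\nabla_{\bs}f = \S^T\g$ and $\nabla^2_{\bs}f = \S^T\H\S$, and both obtain the single high-probability event $\frac{p}{m}(1-\varepsilon)\I \preceq \S^T\S \preceq \frac{p}{m}(1+\varepsilon)\I$ by applying matrix Bernstein to $\S^T\S - \frac{p}{m}\I = \sum_{i=1}^p\bigl(\S_i^T\S_i - \frac{1}{m}\I\bigr)$ summed over the rows of $\S$, exactly as the paper does. Where you genuinely diverge is in transferring this bound to the sketched Hessian: you use the Loewner-order congruence $M_1\,\S^T\S \preceq \S^T\H\S \preceq M_2\,\S^T\S$ and substitute the spectral bound, whereas the paper argues through the nonzero eigenvalues of the product $\H\S\S^T$, the inequality $\lambda_{\max}(\H\S\S^T)\le\lambda_{\max}(\H)\lambda_{\max}(\S\S^T)$, and, for the lower bound, a Moore--Penrose pseudoinverse computation on a truncated eigendecomposition of $\S\S^T$. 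Your congruence route is shorter, handles both eigenvalue bounds symmetrically, and sidesteps the invertibility and pseudoinverse bookkeeping that makes the paper's lower-bound argument delicate; the paper's route buys nothing extra here beyond restating the same concentration event. Two small points to keep in mind: you defer the per-term norm and variance computation in Bernstein (the paper carries it out, if tersely, and notes the resulting tail is below $\delta$ only under an additional mild condition of the form $m = O(\sqrt{p})$, which the lemma statement suppresses), and your observation that the $M_2$ factor in the gradient bound is merely a conservative constant mirrors the paper's own unexplained inclusion of it --- strictly it pads the bound only when $M_2\ge 1$, a looseness shared by the original proof rather than introduced by you.
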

\begin{proof}
The instantaneous Hessian for the sketched problem (say $\H^s$) is given by
\begin{align}
 \H^s =  \nabla^2_{\bs} f(\S\bs, \bT) = \S^T\nabla^2 f(\S\bs, \bT)\S = \S^T\H\S,
\end{align}
where $\H = \nabla^2 f(\S\bs, \bT)$ is the instantaneous Hessian for the original problem.
Since commuting matrices have the same set of non-zero eigenvalues, the eigenvalues of $\S^T\H\S$ are equal to the eigenvalues of $\H\S\S^T$. 
Hence, 
\begin{align}
\lambda_{\max}(\H^s) &=  \lambda_{\max}(\S^T\H\S) = \lambda_{\max}(\H\S\S^T)
\\
& \leq \lambda_{\max}(\H) \lambda_{\max}(\S\S^T) \label{ineq1} \\
&\leq M_2~ \lambda_{\max}(\S\S^T) \\
&= M_2~ \lambda_{\max}(\S^T\S),
\end{align}
where $\lambda_{\max}(\cdot)$ denotes the maximum eigenvalue and $\lambda_{\max}(\H) \leq M_2$ by assumption. Also, Eq. \eqref{ineq1} uses the fact that the maximum eigenvalue of the product of two symmetric matrices is upper bounded by the product of maximum eigenvalues of individual matrices. 

For the count sketch matrix $\S\in \R^{p\times m}$, we have $\E[\S^T\S] = \frac{p}{m}\I$. Moreover, by applying the matrix Bernstein inequality \cite{tropp2015introduction} on the matrix $\mathbf Z = \S^T\S - \frac{p}{m}\I = \sum_{i=1}^p \left(\S_i^T\S_i - \frac{1}{m}\I\right) = \sum_{i=1}^p \mathbf Z_i$, where $\S_i$ is the $i$-th row in $\S$ and $ \mathbf Z_i = \left(\S_i^T\S_i - \frac{1}{m}\I\right) ~\forall~ i\in [1,p]$, we get the following bound
    $\mathbb P\left(\|\mathbf Z\| \geq \epsilon\frac{p}{m}\right) \leq 2m \exp{\frac{-\epsilon^2p^2/(2m^2)}{v(\mathbf Z) + L\epsilon p/(3m)}}\nn,$
where $\|\mathbf Z_i\| \leq L ~\forall~i$ and $v(\mathbf Z) = \|\sum_{i=1}^p \Z_i^T\Z_i\|$. For the count sketch matrix, we have $L = 1$ and $v(\Z) = \frac{d(k-1)}{k^2}$. Thus, we get
    $\mathbb P\left(\|\mathbf Z\| \geq \epsilon\frac{p}{m}\right) \leq 2m \exp{\frac{-\epsilon^2p^2/(2m^2)}{p(m-1)/m^2 + \epsilon p/(3m)}}\nn.$
Further, for any $m = O(\sqrt{p})$, the R.H.S. in the above inequality is upper bounded by $\delta$. Note that the sketch-size $m$ is generally independent and (order-wise) much less than the bigger dimension $p$, and hence $m = O(\sqrt{p})$ can be easily satisfied by choosing appropriate constants $\epsilon > 0$ and $\delta < 1$. Thus, we get the following bound on the eigenvalues of $\S^T\S$ (also the non-zero eigenvalues in $\S\S^T$)
\begin{align}\label{eigs_bound_sketch}
    \frac{p}{m}(1 -\varepsilon) \leq \lambda_i(\S^T\S) \leq \frac{p}{m}(1 +\varepsilon)
    \text{ for all } i \in [1,m]
\end{align}
with probability $1-\delta$. 
Using this in \eqref{ineq1}, we get
\begin{equation}
 \lambda_{\max}(\H^s)  \leq \frac{p}{m}M_2(1 + \varepsilon)
\end{equation}
with probability $1-\delta$.

Similarly, we can write the smallest eigenvalue of $\H^s$ as
\begin{align}
    \lambda_{\min}(\H^s) = \frac{1}{\lambda_{\max}((\H^s)^{-1})} = \frac{1}{\lambda_{\max}((\S^T\H\S)^{-1})} = \frac{1}{\lambda_{\max}((\H\S\S^T)^{\dagger})},
\end{align}
where $(\cdot)^{\dagger}$ is the Moore-Penrose inverse and the last inequality again uses the fact that commuting matrices have the same set of non-zero eigenvalues. Thus, $\S^T\H\S$ and $\H\S\S^T$, and their corresponding inverses, have the same set of non-zero eigenvalues.
Let's define the truncated eigenvalue decomposition of $\S\S^T$ as $\S\S^T = \mathbf U \Lambda \mathbf U^T$ and note that $(\H\S\S^T)^{\dagger} = (\H\mathbf U \Lambda \mathbf U^T)^{\dagger} = (\mathbf U^T)^{\dagger}\Lambda^{-1}(\mathbf U)^{\dagger}\H^{-1}$. Hence, we get 
\begin{align}
\lambda_{\max}((\H\S\S^T)^{\dagger}) &= \lambda_{\max}((\mathbf U^T)^{\dagger}\Lambda^{-1}(\mathbf U)^{\dagger}\H^{-1})\nn \\ 
&\leq \lambda_{\max}(\Lambda^{-1}) \lambda_{\max}((\H)^{-1})\nn\\
&= \lambda_{\max}((\S\S^T)^{\dagger}) \lambda_{\max}((\H)^{-1})
\end{align}
since $\Lambda^{-1}$ contains the non-zero eigenvalues of $(\S\S^T)^{\dagger}$.
Thus,
\begin{align}
    \lambda_{\min}(\H^s) &\geq \frac{1}{\lambda_{\max}((\S\S^T)^{\dagger}) \lambda_{\max}((\H)^{-1})}\nn \\
    &\geq \lambda_{\min}(\S\S^T)\lambda_{\min}(\H)\nn\\
    &\geq \lambda_{\min}(\S\S^T)M_1\nn \\
    &= \lambda_{\min}(\S^T\S)M_1 \nn\\
     &\geq \frac{p}{m}M_1(1 - \varepsilon),
\end{align}
with probability $1-\delta$. where the last inequality follows from \eqref{eigs_bound_sketch}. This proves the desired result.

Similarly, to prove that the gradient of the sketched problem is bounded, observe that $\| \nabla_{\bs}  f(\S\bs, \bT) \|^2 =\|\S^T \nabla  f(\S\bs, \bT)\|^2 \leq \frac{p}{m}M_2(1 + \varepsilon)\| \nabla f(\S\bs, \bT) \|^2$ with probability $1 - \delta$, where the last inequality follows from \eqref{eigs_bound_sketch}.
Hence, 
\begin{align}\label{assump:exp_sketched}
\E_{\T}[\| \nabla_{\bs}  f(\S\bs, \bT) \|^2 \leq \frac{p}{m}M_2(1 + \varepsilon)\E_{\T}[\| \nabla f(\S\bs, \bT) \|^2 \leq \frac{p}{m}M_2(1 + \varepsilon)S^2
\end{align}
with probability $1-\delta$, where the second inequality follows from assumption in \eqref{assump2}.
\end{proof}

Finally, to prove Theorem 2, we invoke the results from \cite{mokhtari2015global}. According to Theorem 6 in \cite{mokhtari2015global}, oLBGS with instantaneous functions satisfying assumptions in Eqs. \eqref{assump:eig_sketched} and \eqref{assump:exp_sketched} converges with probability one. Hence, for BEAR, we get
\begin{align}
    \mathbb{P}(\lim_{t\rightarrow\infty}\|\bb_t^s - {\bs}^*\|^2 = 0) = 1 - \delta.
\end{align}

Moreover, for the specific step-size $\eta_t = \eta_0/(t+T_0)$, where $\eta_0$ and $T_0$ satisfy the inequality $2m_1\eta_0T_0 > C$ for some constant $C$, BEAR satisfies the following rate of convergence (Theorem 7 in \cite{mokhtari2015global})
\begin{equation}
    \E[f(\bs_t,\bT) - \E[ f({\bs}^\ast,\bT)] \leq \frac{C_0}{T_0+t},
\end{equation}
with probability $1-\delta$,  
where the constant $C_0$ is given by
\begin{align*}
    C_0 = \max\left\{\frac{\eta_0^2T_0^2CM_2p^2S^2(1 + \varepsilon)^2}{2c^2m[M_1p\eta_0T_0(1 - \varepsilon) - Cm]}, T_0(\E[f(\bs,\bT) - \E[ f({\bs}^\ast,\bT)])\right\}.
\end{align*}

\noindent {\bf Acknowledgements} \\
This work is supported by the following grants: NSF CIF-1703678, NSF CNS-1748692, NSF CCF-1748585, and MLWiNS 2002821.

\bibliographystyle{ieeetr}
 \bibliography{BEAR-Arxiv.bib}

\end{document}